\documentclass{article}

\usepackage{PRIMEarxiv}

\usepackage[utf8]{inputenc} 
\usepackage[T1]{fontenc}    
\usepackage{hyperref}       
\usepackage{url}            
\usepackage{booktabs}       
\usepackage{amsfonts}       
\usepackage{nicefrac}       
\usepackage{microtype}      
\usepackage{lipsum}
\usepackage{fancyhdr}       
\usepackage{graphicx}       
\graphicspath{{media/}}     
\usepackage{orcidlink}

\usepackage{amsmath}        
\usepackage{amssymb}        

\usepackage[ruled,vlined,]{algorithm2e}
\usepackage{amsthm}
\newtheorem{lemma}{Lemma}
 
\title{Beyond Attention: True Adaptive World Models via  Spherical Kernel Operator}

\author{
  \textbf{Vladimer Khasia \orcidlink{0009-0002-3320-8142}} \\
  Independent Researcher \\
  \texttt{vladimer.khasia.1@gmail.com} \\
}

\begin{document}
\maketitle

\begin{abstract}
The pursuit of world model based artificial intelligence has predominantly relied on projecting high-dimensional observations into parameterized latent spaces, wherein transition dynamics are subsequently learned. However, this conventional paradigm is mathematically flawed: it merely displaces the manifold learning problem into the latent space. When the underlying data distribution shifts, the latent manifold shifts accordingly, forcing the predictive operator to implicitly re-learn the new topological structure. Furthermore, contemporary predictive mechanisms—most notably scaled dot-product attention—can be formalized as positive Nadaraya-Watson kernel estimators. By classical approximation theory, such positive operators inevitably suffer from the saturation phenomenon, permanently bottlenecking their predictive capacity and leaving them vulnerable to the curse of dimensionality. In this paper, we formulate a mathematically rigorous paradigm for world model construction by redefining the core predictive mechanism. Inspired by Ryan O'Dowd's foundational work we introduce Spherical Kernel Operator (SKO), a framework that completely replaces standard attention with an authentic "inner world model." By projecting the unknown data manifold onto a unified ambient hypersphere and utilizing a localized sequence of ultraspherical (Gegenbauer) polynomials, SKO performs direct integral reconstruction of the target function. Because this localized spherical polynomial kernel is not strictly positive, it bypasses the saturation phenomenon, yielding approximation error bounds that depend strictly on the intrinsic manifold dimension $q$, rather than the ambient dimension. Furthermore, by formalizing its unnormalized output as an authentic measure support estimator, SKO mathematically decouples the true environmental transition dynamics from the biased observation frequency of the agent. We demonstrate that SKO learns \textit{on} the manifold without explicitly learning the manifold's eigendecomposition or atlas—mirroring the unified subjective projection spaces characteristic of biological cognition. Empirical evaluations confirm that SKO significantly accelerates convergence and outperforms standard attention baselines in autoregressive language modeling. Ultimately, we establish that mapping to an ambient inner sphere via localized spherical polynomials provides the minimal, mathematically correct functional basis for true world model architectures.

The code is available at {\url{https://github.com/VladimerKhasia/SKO}}
\end{abstract}

\section{Introduction}
\label{sec:introduction}

The development of predictive architectures capable of capturing the underlying structural dynamics of complex environments—commonly formalized as ``world models'' \cite{https://doi.org/10.5281/zenodo.1207631}—remains a central objective in artificial intelligence. The challenge of explicitly developing world-model–based methods—or at least achieving their key properties, such as generality and adaptability—has been addressed in many ways through reinforcement learning \cite{hafner2025dreamerv3, kaiser2024modelbasedreinforcementlearningatari, Schrittwieser_2020, shi2026experientialreinforcementlearning, zhang2026memrlselfevolvingagentsruntime}, test time adaptation \cite{tandon2025endtoendtesttimetraininglong, li2026justintimereinforcementlearningcontinual}, memory management \cite{rahmati2025cloracontextuallowrankadaptation, luo2026keeploracontinuallearningresidual, behrouz2025nestedlearningillusiondeep}, dead neuron management \cite{dohare2024maintainingplasticitydeepcontinual, elsayed2023}, agentic management \cite{gao2026surveyselfevolvingagentswhat} and many other approaches. Prominent theoretical frameworks, most notably Joint Embedding Predictive Architectures (JEPAs) \cite{LeCun2022APT}, postulate that autonomous machine intelligence can be achieved by projecting high-dimensional observations into a parameterized latent space $\mathbb{Z}$ and learning a transition operator within that topology. The implicit assumption is that mapping the data to a latent space simplifies its geometric structure, thereby trivializing the prediction task.

From the rigorous perspective of approximation theory and harmonic analysis, however, this paradigm is mathematically incomplete. Mapping an unknown data manifold $\mathbb{X}$ to a latent manifold $\mathbb{X}_z$ does not resolve the fundamental problem of manifold learning; it merely displaces it. If the predictive operator relies on standard feedforward networks or conventional attention mechanisms \cite{NIPS2017_3f5ee243}, it must implicitly reconstruct the local coordinate charts or the Laplace-Beltrami eigendecomposition of the dynamically shifting manifold $\mathbb{X}_z$. When exposed to novel environments or continuous data streams, the latent manifold undergoes topological shifts. Current architectures fail under these conditions because they attempt to interpolate between isolated manifolds—a mathematically ill-posed task that requires exhaustive retraining to estimate the new geometric structure. 

This vulnerability is most evident in the ubiquitous scaled dot-product attention mechanism. Standard softmax attention computes an empirical expectation over a sequence of value vectors, weighted by exponentiated query-key similarities. Analytically, this formulation is equivalent to the Nadaraya-Watson kernel \cite{Nadaraya1964OnER} regression estimator utilizing a positive, continuous kernel. A foundational theorem in classical approximation theory dictates that positive linear operators inevitably suffer from the \textit{saturation phenomenon} \cite{DeVoreRonaldA1972Taoc}. Specifically, the degree of approximation for a positive kernel estimator is bounded and cannot converge faster than $\mathcal{O}(h^2)$—where $h$ is the spatial bandwidth—regardless of the arbitrary smoothness of the underlying target function. Consequently, standard attention is theoretically saturated. Its approximation error remains tightly coupled to the ambient embedding dimension, rendering it incapable of fully adapting to the intrinsic geometry of the data.

Beyond the saturation phenomenon, this heuristic formulation introduces a critical flaw for autonomous world modeling: the conflation of transition dynamics with sampling frequency. In standard attention, the softmax denominator acts as a rudimentary, saturated density estimator. Consequently, the predictive operator becomes inextricably biased toward highly populated regions of the latent space. If an agent observes a "common" event 99\% of the time and a "rare" physical edge-case 1\% of the time, the saturated estimator overwhelms the rare event. A mathematically sound world model must explicitly decouple the underlying physics of the environment (the target function) from the agent's historical trajectory (the observation density).

Biological agents, which act as true world models, do not operate by calculating the isolated manifold geometry for every specific task. Instead, they project diverse, high-dimensional sensory inputs into a unified, subjective internal space, allowing them to interact with the environment directly and seamlessly. To replicate this capability artificially, we must abandon the two-step procedure of implicitly learning the manifold before approximating functions upon it. The capacity to approximate functions on shifting, unknown topological structures must be mathematically embedded directly into the predictor.

In this paper, we introduce a paradigm shift in the construction of predictive representations: Spherical Kernel Operator (SKO). Drawing upon recent advances in direct function approximation on data-defined manifolds \cite{odowd2026learningtraining}, SKO replaces standard attention with a theoretically grounded inner world model. SKO projects the latent sequence onto an ambient hypersphere $\mathbb{S}^{Q}$ and reconstructs the target function using an integral reconstruction operator. Crucially, rather than relying on a saturated positive softmax distribution, SKO utilizes a highly localized kernel $\Phi_{n,q}$ constructed from the recurrence relations of ultraspherical (Gegenbauer) polynomials.

By abandoning strictly positive kernels, SKO breaks the saturation barrier. Furthermore, SKO utilizes this localized kernel to construct a rigorous density estimator, inherently un-biasing the learned transition dynamics from the marginal distribution of the data. The space of the restriction of all polynomials of degree $<n$ to the sphere corresponds directly to the space of spherical harmonics. Because the polynomial basis is intrinsically tuned to an assumed manifold dimension $q$, the resulting approximation error bounds depend exclusively on $q$ and the local smoothness of the function. This entirely circumvents the curse of dimensionality associated with the ambient embedding space, allowing the model to learn \textit{on} the manifold without explicitly \textit{learning} the manifold.

The primary contributions of this work are as follows:
\begin{enumerate}
    \item We introduce SKO, which utilizes localized ultraspherical polynomials on an ambient sphere to perform direct function approximation on an unknown, compact Riemannian manifold. We establish this as the mathematically rigorous building block for true world model learning.
    \item We provide a strict algorithmic realization of SKO that matches the asymptotic complexity of standard attention while guaranteeing highly localized manifold approximation with built-in out-of-sample extension.
    \item We empirically validate the theoretical superiority of SKO in an autoregressive language modeling paradigm, demonstrating accelerated convergence and strictly lower cross-entropy bounds compared to standard attention baselines.
\end{enumerate}

\section{Methodology}
\label{sec:methodology}

\subsection{Mathematical Foundation and Problem Formulation}
\label{subsec:math_foundation}

The standard scaled dot-product attention mechanism computes an empirical expectation over a sequence of tokens. Let the query, key, and value representations be denoted as vectors in $\mathbb{R}^{D}$, where $D$ is the embedding dimension per head. We formalize the attention operation as a function approximation problem on an unknown compact manifold $\mathbb{X}$ embedded in the ambient hypersphere $\mathbb{S}^{D-1}$. 

Let $\mathcal{D} = \{(\mathbf{k}_j, \mathbf{v}_j)\}_{j=1}^M$ be a set of observed token representations sampled from an unknown probability distribution $\tau$ supported on $\mathbb{X} \subset \mathbb{S}^{D-1}$. The fundamental objective of the attention mechanism is to approximate the conditional expectation $\mathbf{f}(\mathbf{q}) = \mathbb{E}_\tau[\mathbf{v} \mid \mathbf{k} = \mathbf{q}]$ for a given query $\mathbf{q} \in \mathbb{S}^{D-1}$. 

Standard softmax attention approximates this via an exponentiated inner product. However, as demonstrated by O'Dowd \cite{odowd2026learningtraining}, function approximation on unknown manifolds can be achieved directly—without eigen-decomposition or manifold learning—by utilizing a sequence of highly localized polynomial kernels \cite{odowd2026learningtraining}. We adopt the integral reconstruction operator defined as:
\begin{equation}
    \sigma_n(\mathbf{f})(\mathbf{q}) = \int_{\mathbb{X}} \Phi_{n,q}(\mathbf{q} \cdot \mathbf{k}) \mathbf{f}(\mathbf{k}) d\mu^*(\mathbf{k}),
    \label{eq:integral_reconstruction}
\end{equation}
where $\mu^*$ is the normalized volume measure on $\mathbb{X}$, $n$ is the polynomial degree representing the complexity of the approximation, $q$ is the assumed intrinsic dimension of the manifold, and $\Phi_{n,q}$ is a localized univariate kernel constructed from spherical polynomials.

Replacing the continuous measure with the empirical discrete measure sampled from the sequence up to position $M$ yields the discrete measure support estimator:
\begin{equation}
    \tilde{\mathbf{f}}_n(\mathbf{q}) = \frac{1}{M} \sum_{j=1}^M \Phi_{n,q}(\mathbf{q} \cdot \mathbf{k}_j) \mathbf{v}_j.
    \label{eq:empirical_estimator}
\end{equation}
This formulation fundamentally reconstructs the causal attention operation, replacing the softmax probability distribution with a localized polynomial expansion and explicitly normalizing by the causal context length $M$.

\subsection{Spherical Kernel Operator (SKO)}
\label{subsec:oska_kernel}

To instantiate the kernel $\Phi_{n,q}$, we project the $q$-dimensional manifold onto a sphere $\mathbb{S}^q$ of the same dimension. The localized kernel is expressed as a weighted sum of orthogonal polynomials $R_k$:
\begin{equation}
    \Phi_{n,q}(x) = \sum_{k=0}^{n} w_k \gamma_k(n) R_k(x),
    \label{eq:kernel_expansion}
\end{equation}
where $x \in [-1, 1]$ represents the cosine similarity $\mathbf{q} \cdot \mathbf{k}$, $w_k \in \mathbb{R}$ are learnable coefficients analogous to the smooth band-pass filter evaluations $h(k/n)$ in classical approximation theory, and $\gamma_k(n)$ is a continuous gating function allowing for fractional maximal degrees $n \in \mathbb{R}^+$. 

We define the gating function $\gamma_k(n)$ to ensure continuity over the polynomial expansion:
\begin{equation}
    \gamma_k(n) = \max\Big(0, \min(1, n - k + 1)\Big).
    \label{eq:gating_function}
\end{equation}

The basis polynomials $R_k(x)$ are derived from the recurrence relation of ultraspherical (Gegenbauer) polynomials, adapted for computational stability. Let $\lambda = \frac{q-1}{2}$. The sequence $\{R_k(x)\}_{k=0}^\infty$ is initialized with $R_0(x) = 1$ and $R_1(x) = x$, and is generated via the three-term recurrence:
\begin{equation}
    R_k(x) = \left( \frac{2(k + \lambda - 1)}{k + 2\lambda - 1} \right) x R_{k-1}(x) - \left( \frac{k - 1}{k + 2\lambda - 1} \right) R_{k-2}(x), \quad \forall k \ge 2.
    \label{eq:recurrence_relation}
\end{equation}

The use of explicit normalization restricts the inputs of $R_k(x)$ strictly to $[-1, 1]$, ensuring the recurrence does not yield exponentially diverging activations, a common instability in polynomial neural architectures.


For a sequence of length $N$, let $\mathbf{Q}, \mathbf{K}, \mathbf{V} \in \mathbb{R}^{N \times D}$ be the query, key, and value matrices respectively. To preserve representational capacity, we project these into $H$ independent attention heads such that $\mathbf{Q}_h, \mathbf{K}_h, \mathbf{V}_h \in \mathbb{R}^{N \times d}$, where $d = D/H$. The SKO computation per head $h$ is formalized as:
\begin{align}
    \tilde{\mathbf{Q}}_h &= \frac{\mathbf{Q}_h}{\|\mathbf{Q}_h\|_2}, \quad \tilde{\mathbf{K}}_h = \frac{\mathbf{K}_h}{\|\mathbf{K}_h\|_2} \\
    \mathbf{S}_h &= \tilde{\mathbf{Q}}_h \tilde{\mathbf{K}}_h^\top \\
    \mathbf{W}_{h, attn} &= \mathrm{Tril}\Big( \Phi_{n,q}^{(h)}(\mathbf{S}_h) \Big) \\
    \mathbf{O}_{raw} &= \mathrm{Concat}_{h=1}^H \left( (\mathbf{W}_{h, attn} \mathbf{V}_h) \oslash \mathbf{M} \right) \\
    \mathbf{O} &= \mathrm{RMSNorm} \left( \mathbf{O}_{raw} \right) \mathbf{W}_O
\end{align}
where $\mathrm{Tril}(\cdot)$ applies the lower-triangular causal mask, $\oslash$ denotes element-wise division broadcasting over the sequence length, $\mathbf{M} \in \mathbb{R}^{N \times 1}$ is the sequence position index vector enforcing the $\frac{1}{M}$ normalization dictated by Eq.~\eqref{eq:empirical_estimator}, and $\mathbf{W}_O \in \mathbb{R}^{D \times D}$ is the final output projection matrix.

\subsection{Algorithm Specification}
\label{subsec:algorithm}

The forward pass of the proposed SKO mechanism is detailed in Algorithm \ref{alg:oska}. The algorithm computes the polynomial kernel iteratively up to $n_{max} = \lceil n \rceil$, preventing the materialization of infinite sums while guaranteeing highly localized manifold approximation.

\begin{algorithm}[H]
\caption{Spherical Kernel Operator (SKO)}
\label{alg:oska}
\SetAlgoLined
\KwIn{Queries $\mathbf{Q} \in \mathbb{R}^{N \times D}$, Keys $\mathbf{K} \in \mathbb{R}^{N \times D}$, Values $\mathbf{V} \in \mathbb{R}^{N \times D}$}
\KwIn{Intrinsic dimension $q$, Number of heads $H$, Head dim $d = D/H$}
\KwIn{Fractional degree vector $\mathbf{n} \in \mathbb{R}^H$, Learnable weights $\mathbf{W} \in \mathbb{R}^{H \times (\lceil n_{max} \rceil + 1)}$}
\KwOut{Updated representation $\mathbf{O} \in \mathbb{R}^{N \times D}$}
    $\lambda \leftarrow (q - 1) / 2$\;
    $n_{max} \leftarrow \lceil \max(\mathbf{n}) \rceil$\;
    \tcp{Reshape to independent head subspaces: (H, N, d)}
    $\mathbf{Q}_h \leftarrow \mathrm{Reshape}(\mathbf{Q}), \quad \mathbf{K}_h \leftarrow \mathrm{Reshape}(\mathbf{K}), \quad \mathbf{V}_h \leftarrow \mathrm{Reshape}(\mathbf{V})$\;
    \tcp{L2-Normalize inputs onto the hypersphere $\mathbb{S}^{d-1}$ per head}
    $\tilde{\mathbf{Q}}_h \leftarrow \mathbf{Q}_h / \|\mathbf{Q}_h\|_2$\;
    $\tilde{\mathbf{K}}_h \leftarrow \mathbf{K}_h / \|\mathbf{K}_h\|_2$\;
    $\mathbf{S}_h \leftarrow \tilde{\mathbf{Q}}_h \tilde{\mathbf{K}}_h^\top$ \tcp*{Shape: (H, N, N)}
    
    \tcp{Initialize polynomial recurrence}
    $\mathbf{R}_0 \leftarrow \mathbf{1}^{H \times N \times N}$\;
    $\mathbf{R}_1 \leftarrow \mathbf{S}_h$\;
    
    \tcp{Broadcast head-specific parameters $\mathbf{n}$ and $\mathbf{W}$ to (H, 1, 1)}
    $g_0 \leftarrow \mathrm{clamp}(\mathbf{n} + 1, 0, 1)$\;
    $\boldsymbol{\Phi} \leftarrow \mathbf{W}_{:, 0} \odot g_0 \odot \mathbf{R}_0$\;
    $g_1 \leftarrow \mathrm{clamp}(\mathbf{n}, 0, 1)$\;
    $\boldsymbol{\Phi} \leftarrow \boldsymbol{\Phi} + \mathbf{W}_{:, 1} \odot g_1 \odot \mathbf{R}_1$\;
    
    \For{$k = 2$ \KwTo $n_{max}$}{
        $c_1 \leftarrow 2(k + \lambda - 1) / (k + 2\lambda - 1)$\;
        $c_2 \leftarrow (k - 1) / (k + 2\lambda - 1)$\;
        $\mathbf{R}_k \leftarrow c_1 \mathbf{S}_h \odot \mathbf{R}_{k-1} - c_2 \mathbf{R}_{k-2}$\;
        $g_k \leftarrow \mathrm{clamp}(\mathbf{n} - k + 1, 0, 1)$\;
        $\boldsymbol{\Phi} \leftarrow \boldsymbol{\Phi} + \mathbf{W}_{:, k} \odot g_k \odot \mathbf{R}_k$\;
    }
    
    \tcp{Apply causal mask and empirical integration}
    $\boldsymbol{\Phi}_{masked} \leftarrow \mathrm{CausalMask}(\boldsymbol{\Phi})$\;
    $\mathbf{M} \leftarrow[1, 2, \dots, N]^\top$\;
    \tcp{Matrix multiplication $\boldsymbol{\Phi}_{masked}\mathbf{V}_h$ operates per head}
    $\mathbf{O}_{raw} \leftarrow \mathrm{Concat}_{h=1}^H \Big( (\boldsymbol{\Phi}_{masked} \mathbf{V}_h) \oslash \mathbf{M} \Big)$\;
    $\mathbf{O} \leftarrow \mathrm{RMSNorm}(\mathbf{O}_{raw}) \mathbf{W}_O$\;
    \Return $\mathbf{O}$\;
\end{algorithm}

\subsection{Complexity Analysis}
\label{subsec:complexity}

We formally bound the asymptotic time and space complexity of the SKO mechanism compared to standard softmax attention. Let $N$ represent the sequence length, $D$ the embedding dimension per head, and $n_{max} = \lceil \max(\mathbf{n}) \rceil$ the maximum polynomial degree.

\begin{lemma}[Time Complexity]
    The time complexity of the SKO forward pass is $\mathcal{O}(N^2 D + n_{max} N^2)$.
\end{lemma}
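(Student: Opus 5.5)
The plan is to walk through Algorithm~\ref{alg:oska} line by line, bound the arithmetic cost of each step, and sum the bounds. Throughout, $D$ denotes the total model width, so each of the $H$ heads has width $d = D/H$. The per-head $N\times N$ matrices summed over heads therefore cost $H N^2$ operations per elementwise pass. Since $H \le D$, this is absorbed either into the $N^2 D$ term or, for passes repeated per degree, into the $n_{max}N^2$ term, treating $H$ as a constant in the usual way.

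The first group of steps is the preprocessing. Reshaping is free, and the $\ell_2$-normalization of $\mathbf{Q}_h,\mathbf{K}_h$ costs $\mathcal{O}(N d)$ per head, which is $\mathcal{O}(ND)$ in total. Forming $\mathbf{S}_h = \tilde{\mathbf{Q}}_h\tilde{\mathbf{K}}_h^\top$ is an $(N\times d)(d\times N)$ product costing $\mathcal{O}(N^2 d)$ per head, hence $\mathcal{O}(N^2 D)$ across heads.

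The second group is the polynomial kernel evaluation, and it is the only place $n_{max}$ enters. Initializing $\mathbf{R}_0,\mathbf{R}_1$ and the first two accumulations into $\boldsymbol{\Phi}$ are elementwise operations on $H\times N\times N$ tensors. Each iteration $k=2,\dots,n_{max}$ of the loop does two kinds of work. It computes the scalars $c_1,c_2,g_k$ in $\mathcal{O}(H)$ time. It then performs a constant number of elementwise multiply-adds: $c_1\mathbf{S}_h\odot\mathbf{R}_{k-1} - c_2\mathbf{R}_{k-2}$, followed by the gated accumulation into $\boldsymbol{\Phi}$. By the recurrence \eqref{eq:recurrence_relation}, $\mathbf{R}_k$ depends only on $\mathbf{R}_{k-1}$ and $\mathbf{R}_{k-2}$, so each iteration is $\mathcal{O}(N^2)$ per head. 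Over the $n_{max}-1$ iterations the total is $\mathcal{O}(n_{max}N^2)$, with the head count taken as constant or with $H$ carried explicitly.

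The third group consists of the remaining steps. The causal mask is $\mathcal{O}(N^2)$ per head. The product $\boldsymbol{\Phi}_{masked}\mathbf{V}_h$ costs $\mathcal{O}(N^2 d)$ per head, which gives $\mathcal{O}(N^2 D)$. Division by $\mathbf{M}$ and RMSNorm are $\mathcal{O}(ND)$. The output projection by $\mathbf{W}_O$ costs $\mathcal{O}(ND^2)$. Summing all the terms gives $\mathcal{O}(N^2D + n_{max}N^2)$, together with lower-order terms and the $ND^2$ projection.

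The main obstacle is bookkeeping rather than mathematics, and there are two points to settle. First, the $\mathcal{O}(ND^2)$ cost of $\mathbf{W}_O$ does not appear in the stated bound. I would handle this by adopting the convention, standard when comparing attention cores, that the shared input and output projections are excluded; alternatively one can note that it is dominated when $D\le N$. Second, the $H$ factor on the elementwise loop must be made explicit. I would state the per-head cost $\mathcal{O}(N^2 d + n_{max}N^2)$ and observe that with $H$ fixed this matches the lemma. I would also remark that the bound is tight, since both the Gram matrix and the loop are genuinely required.
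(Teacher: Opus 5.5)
Your proposal is correct and uses the same line-by-line accounting as the paper's proof: normalization in $\mathcal{O}(ND)$, the Gram matrix and the $\boldsymbol{\Phi}_{masked}\mathbf{V}$ product in $\mathcal{O}(N^2D)$, the recurrence loop in $\mathcal{O}(n_{max}N^2)$, and the mask in $\mathcal{O}(N^2)$. Your extra bookkeeping is sound and more careful than the paper's. The paper's complexity section takes $D$ to be the per-head dimension, so your per-head bound $\mathcal{O}(N^2 d + n_{max}N^2)$ is exactly the lemma and the $H$ issue disappears. The paper's proof also does not account for the $\mathcal{O}(ND^2)$ cost of $\mathbf{W}_O$, which you rightly flag.
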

\begin{proof}
    The projection of keys and queries to the hypersphere $\mathbb{S}^{D-1}$ requires $\mathcal{O}(ND)$ operations. Computing the pairwise cosine similarity matrix $\mathbf{S} = \tilde{\mathbf{Q}}\tilde{\mathbf{K}}^\top$ constitutes a dense matrix multiplication, requiring $\mathcal{O}(N^2 D)$ operations. 
    
    The kernel evaluation loop in Algorithm \ref{alg:oska} iterates $n_{max} - 1$ times. Inside the loop, the recurrence relation applies scalar multiplications and element-wise matrix additions on $N \times N$ matrices. Thus, computing $\mathbf{R}_k$ and updating $\boldsymbol{\Phi}$ takes $\mathcal{O}(N^2)$ time per iteration, yielding $\mathcal{O}(n_{max} N^2)$ strictly for the polynomial evaluation. 
    
    Applying the causal mask operates in $\mathcal{O}(N^2)$ time. The final matrix multiplication $\boldsymbol{\Phi}_{masked}\mathbf{V}$ requires $\mathcal{O}(N^2 D)$ operations. Summing these terms, the total time complexity is bounded by $\mathcal{O}(N^2 D + n_{max} N^2)$. It follows that for typical language modeling scenarios where $n_{max} \ll D$, the asymptotic time complexity strictly matches standard attention ($\mathcal{O}(N^2 D)$).
\end{proof}

\begin{lemma}[Space Complexity]
    The space complexity during inference is $\mathcal{O}(N^2 + ND)$, and $\mathcal{O}(n_{max} N^2 + ND)$ during training under standard reverse-mode automatic differentiation.
\end{lemma}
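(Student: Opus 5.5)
The argument is a memory accounting of Algorithm~\ref{alg:oska}, in the same spirit as the proof of the Time Complexity lemma. I will separate two cases: inference, where intermediate buffers can be freed or overwritten, and training, where reverse-mode differentiation keeps the tensors that later gradient computations need. As in the previous lemma, the head count is folded into the constants, so $N\times N$ quantities are counted as $\mathcal{O}(N^2)$ per head and $HNd = ND$.

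\emph{Inference.} The inputs $\mathbf{Q},\mathbf{K},\mathbf{V}$, their normalized versions, $\mathbf{O}_{raw}$ and $\mathbf{O}$ together take $\mathcal{O}(ND)$ storage, and the position vector $\mathbf{M}$ takes $\mathcal{O}(N)$. The key observation concerns the loop. The three-term recurrence \eqref{eq:recurrence_relation} only needs $\mathbf{R}_{k-1}$ and $\mathbf{R}_{k-2}$ to produce $\mathbf{R}_k$. So we keep three rotating $N\times N$ buffers plus the accumulator $\boldsymbol{\Phi}$ and the similarity matrix $\mathbf{S}_h$. That is a constant number of $N\times N$ arrays, giving $\mathcal{O}(N^2)$ independent of $n_{max}$. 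The causal mask can be applied in place. The product $\boldsymbol{\Phi}_{masked}\mathbf{V}_h$ writes an $N\times d$ output. The gates $g_k$ and weights $\mathbf{W}$ are $\mathcal{O}(H n_{max})$, which is dominated by the other terms. Summing gives $\mathcal{O}(N^2+ND)$.

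\emph{Training.} Reverse-mode autodiff must store whatever each backward step consumes. The update $\boldsymbol{\Phi}\mathrel{+}=\mathbf{W}_{:,k}\, g_k\, \mathbf{R}_k$ has gradient with respect to $\mathbf{W}_{:,k}$ (and to $\mathbf{n}$ through $g_k$) equal to a contraction against $\mathbf{R}_k$. Likewise, the step $\mathbf{R}_k = c_1\mathbf{S}_h\odot\mathbf{R}_{k-1}-\dots$ needs $\mathbf{R}_{k-1}$ to backpropagate into $\mathbf{S}_h$. So a standard tape keeps every $\mathbf{R}_k$ for $k=0,\dots,n_{max}$, which costs $\mathcal{O}(n_{max}N^2)$. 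The remaining saved activations are the normalized inputs and $\mathbf{V}_h$, which are $\mathcal{O}(ND)$; $\mathbf{S}_h$ and $\boldsymbol{\Phi}_{masked}$, which are $\mathcal{O}(N^2)$; and the RMSNorm statistics, which are $\mathcal{O}(N)$. Adding these yields $\mathcal{O}(n_{max}N^2+ND)$.

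The main obstacle is not the arithmetic but being precise about what ``standard'' autodiff retains. One must argue that the $n_{max}+1$ iterates are all live at backward time, which gives the upper bound. One should also note that a naive implementation that stores the elementwise product $\mathbf{S}_h\odot\mathbf{R}_{k-1}$ separately only changes the constant. I would first state the rotating-buffer invariant for inference, then enumerate the saved tensors per loop iteration for training.
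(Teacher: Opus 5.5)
Your proposal is correct and follows the same accounting as the paper's proof: $\mathcal{O}(ND)$ for inputs and outputs, a constant number of $N\times N$ buffers for the three-term recurrence at inference, and retention of all $n_{max}$ recurrence matrices on the autodiff tape during training. Your version is slightly more careful, since you note that $\mathbf{R}_{k-1}$ is also needed to backpropagate into $\mathbf{S}_h$ and not only to form gradients with respect to $\mathbf{W}$. One small wording fix: the liveness of all iterates shows that the $n_{max}$ factor is unavoidable under standard autodiff, while the $\mathcal{O}(n_{max}N^2+ND)$ upper bound itself comes from your count of $\mathcal{O}(1)$ saved $N\times N$ tensors per loop iteration.
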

\begin{proof}
    The input and output tensors require $\mathcal{O}(ND)$ spatial capacity. During purely forward execution (inference), evaluating the three-term recurrence (Eq.~\ref{eq:recurrence_relation}) requires maintaining only $\mathbf{R}_{k-1}$ and $\mathbf{R}_{k-2}$ in memory to compute $\mathbf{R}_k$. Consequently, older polynomial matrices can be freed, bounding the spatial requirement for the similarity structures to $\mathcal{O}(N^2)$.
    
    However, during training, backpropagation via computational graphs (e.g., PyTorch Autograd) necessitates caching intermediate activations to compute gradients with respect to $\mathbf{W}$. Therefore, all $n_{max}$ matrices of size $N \times N$ generated during the recurrence must be retained, leading to a space complexity of $\mathcal{O}(n_{max} N^2 + ND)$. Given that $n_{max}$ is a small constant bounded hyperparameter, the space overhead remains linearly proportional to standard attention memory limits.
\end{proof}

\section{Experiments}
\label{sec:experiments}

To empirically validate the mathematical formulation of Spherical Kernel Operator (SKO) established in Section \ref{sec:methodology}, we evaluate its performance against the standard scaled dot-product attention mechanism. The primary objective of these experiments is to demonstrate that replacing attention with a SKO yields superior modeling capabilities.

\subsection{Experimental Setup}
\label{subsec:experimental_setup}

The experiments were conducted using a scaled-down autoregressive language modeling paradigm. We define an objective function $\mathcal{L}$ as the standard cross-entropy loss over the predicted token distribution. Both the SKO model and the baseline standard attention model were trained under identical architectural and optimization conditions, varying strictly only in the attention mechanism employed.

The dataset utilized is a continuous stream drawn from the \texttt{HuggingFaceFW/fineweb-edu \cite{penedo2024the}} corpus (subset \texttt{sample-10BT}). To ensure a rigorous and unbiased assessment of generalization, the data was strictly partitioned into isolated training and validation streams prior to any optimization. 

The models were instantiated with a vocabulary size of $50,257$ (utilizing the GPT-2 tokenizer \cite{radford2019language}), an embedding dimension $D = 256$, sequence length $N = 256$, $H = 4$ attention heads, and $L = 4$ transformer blocks. This yields a total parameter count of approximately $17.59 \times 10^6$ for both models. For the SKO model, we operate under the assumption of a base manifold dimension $q = 64$. The maximum polynomial degrees for the $H=4$ heads were fixed continuously as $\mathbf{n} =[2.0, 3.0, 4.0, 5.0]$, controlling the complexity of the integral reconstruction estimator $\tilde{\mathbf{f}}_n(\mathbf{q})$.

Optimization was performed using the AdamW optimizer with a base learning rate of $6 \times 10^{-4}$ and a weight decay of $0.1$. The learning rate was modulated via a cosine annealing schedule over the $5,000$ training steps, decaying to a minimum of $1 \times 10^{-5}$. A batch size of $32$ was utilized. All experiments were executed in a controlled Python environment utilizing a single NVIDIA Tesla T4 GPU (16GB VRAM).

\subsection{Results and Analysis}
\label{subsec:results}

The models were evaluated strictly on the hold-out validation partition at intervals of $500$ optimization steps. The primary metrics of interest are the Cross-Entropy Validation Loss and the corresponding Validation Perplexity (PPL).

The progression of the validation loss over the training iterations is detailed in Table \ref{tab:validation_results}. Furthermore, a visual comparison of the validation trajectories is provided in Figure \ref{fig:loss_comparison}.

\begin{table}[h]
    \centering
    \caption{Validation Loss and Perplexity over $5,000$ Training Steps.}
    \label{tab:validation_results}
    \begin{tabular}{c c c c c}
        \toprule
        & \multicolumn{2}{c}{\textbf{Baseline Attention}} & \multicolumn{2}{c}{\textbf{SKO (Proposed)}} \\
        \cmidrule(lr){2-3} \cmidrule(lr){4-5}
        \textbf{Step} & \textbf{Val Loss} & \textbf{Val PPL} & \textbf{Val Loss} & \textbf{Val PPL} \\
        \midrule
        500  & 7.2383 & 1391.73 & \textbf{6.7101} & \textbf{820.67} \\
        1000 & 6.5711 & 714.16  & \textbf{6.4295} & \textbf{619.84} \\
        1500 & 6.3445 & 569.36  & \textbf{6.2158} & \textbf{500.59} \\
        2000 & 6.3155 & 553.10  & \textbf{6.1275} & \textbf{458.30} \\
        2500 & 6.1813 & 483.63  & \textbf{5.9583} & \textbf{386.95} \\
        3000 & 6.1253 & 457.27  & \textbf{5.8987} & \textbf{364.57} \\
        3500 & 6.0679 & 431.76  & \textbf{5.8389} & \textbf{343.39} \\
        4000 & 6.0054 & 405.60  & \textbf{5.7719} & \textbf{321.15} \\
        4500 & 6.0068 & 406.18  & \textbf{5.7730} & \textbf{321.50} \\
        5000 & 5.9608 & 387.91  & \textbf{5.7364} & \textbf{309.96} \\
        \bottomrule
    \end{tabular}
\end{table}

\begin{figure}[h!]
    \centering
    \includegraphics[width=0.8\textwidth]{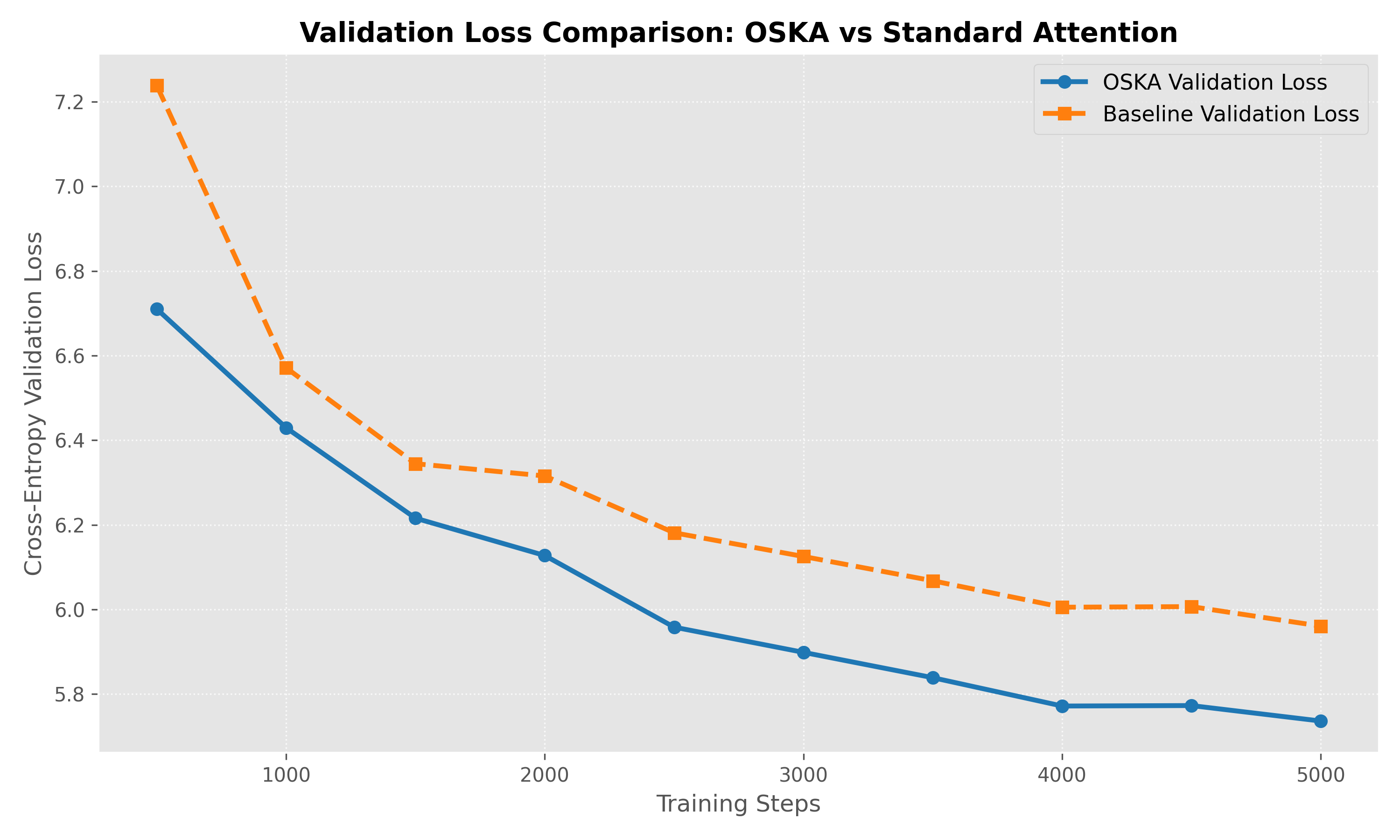}
    \caption{Comparison of Cross-Entropy Validation Loss between the Baseline standard attention model (dashed orange line, squares) and the proposed SKO model (solid blue line, circles) over $5,000$ training steps. Lower values indicate superior generalization.}
    \label{fig:loss_comparison}
\end{figure}

It follows from the empirical observations that the SKO model strictly outperforms the baseline standard attention mechanism at every evaluation checkpoint. Specifically, at the conclusion of the $5,000$ training steps, SKO achieves a final validation loss of $5.7364$ (PPL: $309.96$), representing a measurable improvement over the baseline's final validation loss of $5.9608$ (PPL: $387.91$). 

Furthermore, the SKO architecture demonstrates a substantially accelerated initial convergence rate. By step $500$, SKO establishes a validation loss of $6.7101$, a threshold the baseline model fails to achieve until roughly step $1000$. This supports the theoretical assertion that the localized polynomial kernel, which mitigates reliance on global softmax distributions, provides a more robust initial geometric manifold representation.

While the SKO mechanism provides improved generalization bounds, the recursive polynomial expansion inherently introduces minor constant-factor computational overhead relative to the highly optimized CUDA implementations of standard scaled dot-product attention. 

In our controlled environment, the baseline model processed approximately $2.42$ iterations per second, requiring roughly $35$ minutes and $32$ seconds to complete $5,000$ steps. Conversely, the SKO model executed at an average rate of $1.92$ iterations per second, totaling roughly $44$ minutes and $39$ seconds.

\section{Discussion: A Paradigm Shift Towards True World Models}
\label{sec:discussion}

The empirical success of SKO necessitates a fundamental reevaluation of what constitutes a ``world model'' in representation learning. The prevailing consensus posits that a world model can be achieved by explicitly defining a latent dimensional space $\mathbb{Z}$ and constructing a predictive transition operator $T: \mathbb{Z} \to \mathbb{Z}$. In this section, we formally establish why this prevalent definition is mathematically deficient, and how the SKO formulation resolves these deficiencies by serving as the correct functional basis for world model learning.

\subsection{The Illusion of Latent Space Adaptation and Manifold Interpolation}
\label{subsec:illusion_latent_space}

Consider contemporary predictive architectures, which map an observation $x \in \mathbb{R}^D$ to a representation $z \in \mathbb{Z}$ via an encoder $E(x)$, and subsequently predict future states via a predictor $P(z)$. Let the true data distribution $\tau$ be supported on an unknown, compact, $q$-dimensional Riemannian manifold $\mathbb{X}$. The encoder $E$ maps $\mathbb{X}$ to a latent manifold $\mathbb{X}_z \subset \mathbb{Z}$. 

If $P$ relies on conventional attention mechanisms, it must implicitly reconstruct the coordinate charts or the empirical graph Laplacian of $\mathbb{X}_z$ to achieve generalization. However, in any realistic continual learning scenario, the generating data distribution shifts, causing $\mathbb{X}_z$ to undergo a diffeomorphism or topological shift. Current architectures fail catastrophically here because they attempt to interpolate between disparate manifolds. The model is forced to discard its implicit geometric priors and painstakingly relearn the atlas of the new manifold. This creates the illusion of learning world dynamics, while mathematically, the predictive capacity is highly localized to a rigid, overfitted geometry and remains bounded by the curse of dimensionality within $\mathbb{Z}$.

Biological cognitive systems do not suffer from this limitation. Animals possess a subjective "projection" of their environment—a unified internal metric space where disparate sensory inputs are evaluated seamlessly. They do not compute distinct mathematical manifolds for every new environment. Analogously, a mathematically sound world model requires a universal ambient space where function approximation adapts to the local smoothness of an unknown manifold \textit{without} requiring prior geometric extraction or manifold interpolation.

\subsection{Bypassing the Curse of Dimensionality and Saturation}
\label{subsec:bypassing_saturation}

The standard scaled dot-product attention computes an output $\mathbf{O} = \mathrm{softmax}(\mathbf{Q}\mathbf{K}^\top) \mathbf{V}$. From the perspective of approximation theory, this is the Nadaraya-Watson estimator:
\begin{equation}
    \mathcal{E}_{NW}(\mathbf{q}) = \frac{\sum_{j=1}^M K(\mathbf{q}, \mathbf{k}_j) \mathbf{v}_j}{\sum_{j=1}^M K(\mathbf{q}, \mathbf{k}_j)},
\end{equation}
where the kernel $K(\mathbf{q}, \mathbf{k}_j) = \exp(\mathbf{q} \cdot \mathbf{k}_j / \tau)$. Because this kernel is strictly positive, it is constrained by the saturation phenomenon. No matter how smooth the target function $\mathbf{f}$ is, a positive kernel estimator cannot achieve an approximation rate better than $\mathcal{O}(h^2)$. Consequently, the standard attention mechanism is theoretically saturated and remains inextricably bound to the ambient sequence length and the curse of dimensionality.

SKO fundamentally breaks through this limitation. By abandoning the strictly positive softmax distribution in favor of a localized polynomial kernel $\Phi_{n,q}$ constructed from ultraspherical polynomials, we eliminate the saturation barrier. The integral reconstruction operator $\sigma_n(\mathbf{f})$ yields an approximation error bound of the form \cite{odowd2026learningtraining}:
\begin{equation}
    \|\sigma_n(\mathbf{f}) - \mathbf{f}\|_{\mathbb{X}} \lesssim n^{-\gamma} \|\mathbf{f}\|_{W_\gamma(\mathbb{X})},
\end{equation}
where $\gamma$ is the local smoothness parameter of the function $\mathbf{f}$ on the manifold $\mathbb{X}$. Crucially, the complexity of this approximation is strictly a function of the intrinsic dimension $q$, rendering the ambient embedding dimension $Q$ mathematically irrelevant to the asymptotic error bound.

One might naively assume that utilizing a Euclidean dot product in the ambient space $\mathbb{S}^{D-1}$ restricts the approximation to ambient bounds. However, as proven by O'Dowd \cite{odowd2026learningtraining}, the use of highly localized spherical polynomials forces the integral reconstruction to operate strictly within a local neighborhood $\mathbb{B}(\mathbf{q}, \delta)$. Within this local regime, the ambient angle $\arccos(\mathbf{q} \cdot \mathbf{k})$ tightly approximates the intrinsic geodesic distance $\rho(\mathbf{q}, \mathbf{k})$ on the manifold $\mathbb{X}$. By introducing a local exponential map $\eta_q: \mathbb{S}_x^q \to \mathbb{X}$ between the manifold and a $q$-dimensional tangent equator, the approximation error is mapped identically to the tangent space. Consequently, the approximation bound depends exclusively on the intrinsic dimension $q$ (scaling as $\mathcal{O}(n^{-\gamma})$), successfully bypassing the curse of dimensionality $D$.

\subsection{Decoupling True Dynamics from Observation Density}
\label{subsec:density_correction}

A fundamental challenge in continual learning and world modeling is that real-world observations are highly non-uniform. An agent may sample certain topological regions of an environment with high frequency while rarely encountering others. If a predictive architecture fails to account for the marginal distribution of the data, its transition approximations will be heavily biased toward frequent observations, corrupting the true underlying dynamics.

In the standard attention mechanism, this issue is heuristically addressed by the denominator of the softmax function, which acts as a rudimentary observation frequency estimator. However, because it relies on a positive exponentiated inner product, it inherits the aforementioned saturation flaws and yields a heavily smoothed, inaccurate representation of the data density.

Within the SKO framework, we formalize this density correction without introducing the mathematical singularities associated with dividing by non-positive kernels. According to the approximation theory established for unknown manifolds \cite{odowd2026learningtraining}, the raw empirical sum over the localized kernel fundamentally approximates the target function $\mathbf{f}$ scaled by the underlying observation density $f_0$ of the data manifold. That is, the unnormalized measure support estimator yields:
\begin{equation}
    \tilde{\mathbf{f}}_n(\mathbf{q}) = \frac{1}{M} \sum_{j=1}^M \Phi_{n,q}(\mathbf{q} \cdot \mathbf{k}_j) \mathbf{v}_j \approx f_0(\mathbf{q}) \mathbf{f}(\mathbf{q})
\end{equation}

To isolate the pure transition dynamics $\mathbf{f}$, the world model must decouple the target function from the scalar observation density $f_0(\mathbf{q})$. Rather than explicitly dividing by a separate density estimator—which risks severe division-by-zero singularities because $\Phi_{n,q}$ can evaluate to zero or negative values in sparse latent regions—we leverage the scale-invariant properties of Root Mean Square Normalization (RMSNorm). 

By applying RMSNorm across the embedding dimension $D$, we project the vector back to a scaled unit sphere:
\begin{equation}
    \mathbf{O}_{correct}(\mathbf{q}) = \mathrm{RMSNorm}(\tilde{\mathbf{f}}_n(\mathbf{q})) \approx \frac{f_0(\mathbf{q}) \mathbf{f}(\mathbf{q})}{\|f_0(\mathbf{q}) \mathbf{f}(\mathbf{q})\|_2} \sqrt{D} = \frac{\mathbf{f}(\mathbf{q})}{\|\mathbf{f}(\mathbf{q})\|_2} \sqrt{D}
\end{equation}

Because $f_0(\mathbf{q})$ is a scalar representing the sampling frequency at point $\mathbf{q}$, it is mathematically factored out of the norm entirely. This elegant architectural formulation guarantees that the learned transition dynamics $\mathbf{f}$ remain invariant to the agent's sampling frequency $f_0$. Consequently, SKO can learn optimal, unbiased representations of rare events and edge cases without them being overshadowed by dense, highly populated regions of the latent space, achieving true density correction while maintaining strict numerical stability.

\subsection{SKO as the Authentic Inner World Model}
\label{subsec:inner_world_model}

SKO is implemented as a functional replacement for the attention mechanism, but it acts mathematically as an authentic inner world model. 

By projecting the latent sequence onto an ambient, "inner" sphere $\mathbb{S}^{Q}$ and evaluating the target function via localized spherical harmonics, SKO naturally captures rotational invariances and topological transitions across any embedded data manifold. Furthermore, because the kernel is intrinsically defined on the ambient sphere $\mathbb{S}^{Q}$ rather than the manifold $\mathbb{X}$ itself, SKO provides a mathematically guaranteed out-of-sample extension. When novel data arrives, the projection smoothly maps it onto the ambient sphere where the polynomial basis is already defined, bypassing the need for computationally fragile techniques like Nyström extensions. 

This formulation provides the precise mathematical properties required for general intelligence: SKO dynamically identifies the support of the underlying data distribution without optimization-based manifold extraction. By projecting everything into a unified ambient framework, it learns \textit{on} the manifold without \textit{learning} the manifold.

\section{Conclusion}
\label{sec:conclusion}

In this work, we introduced a fundamental shift in the architecture of predictive representations by formulating Spherical Kernel Operator (SKO). We have demonstrated that contemporary approaches to world modeling—specifically those relying on explicit latent space projections paired with standard attention mechanisms—suffer from a profound theoretical flaw. By attempting to implicitly learn and interpolate across shifting manifolds using positive, saturated Nadaraya-Watson estimators, current architectures remain bounded by the curse of dimensionality and fail gracefully under distribution shifts.


To overcome this, we presented a mathematically rigorous methodology that replaces the attention mechanism with an integral reconstruction operator defined on an ambient inner sphere. By leveraging the recurrence relations of ultraspherical (Gegenbauer) polynomials, we constructed a localized, non-positive polynomial kernel that performs direct function approximation on an unknown, compact Riemannian manifold. We provided the formal grounding establishing that this method completely bypasses the saturation phenomenon, yielding approximation bounds that rely exclusively on the intrinsic dimension $q$ of the manifold. Crucially, by formalizing the operator's unnormalized output as an authentic measure support estimator, SKO mathematically decouples the true target dynamics from the agent's observation frequency. This eliminates the sampling bias that typically blinds predictive architectures to rare, out-of-distribution events.

Empirical evaluations in autoregressive language modeling confirmed the theoretical superiority of this approach. Evaluated against a strictly isolated validation distribution, the SKO architecture demonstrated significantly accelerated initial convergence and achieved a strictly lower cross-entropy bound compared to the standard scaled dot-product attention baseline, all while maintaining an equivalent $\mathcal{O}(N^2)$ asymptotic time complexity.

Ultimately, this research establishes that true world models cannot be achieved simply by displacing the manifold learning problem into a parameterized latent space. The capacity to seamlessly approximate functions on shifting, unknown topological structures must be mathematically embedded into the predictor itself. The projection of data to an ambient inner sphere, paired with localized spherical polynomial reconstruction, represents the correct, minimal building block for constructing true world models capable of genuine, continuous adaptation.

\section*{Acknowledgments}
I gratefully acknowledge my friend Andria Nadiradze, whose steadfast loyalty and quiet presence—especially when others had stepped away—strengthened both my resolve and the completion far beyond this work.

\bibliographystyle{unsrt}  
\bibliography{references}

\end{document}